\def\eqref#1{equation~\ref{#1}}
\def\1{\bm{1}}
\DeclareMathAlphabet{\mathsfit}{\encodingdefault}{\sfdefault}{m}{sl}
\SetMathAlphabet{\mathsfit}{bold}{\encodingdefault}{\sfdefault}{bx}{n}
\newtheorem{theorem}{Theorem}
\newtheorem{proposition}[theorem]{Proposition}
\newtheorem{remark}{Remark}
\title{Efficient privacy-preserving inference \\ for convolutional neural networks}
\author{Han Xuanyuan, Francisco Vargas \& Stephen Cummins \\
Department of Computer Science and Technology \\
University of Cambridge \\
\texttt{\{hx263,fav25,sac92\}@cam.ac.uk} \\
}
\newcommand{\unsim}{\mathord{\sim}}
\begin{document}

\maketitle

\begin{abstract}
The processing of sensitive user data using deep learning models is an area that has gained recent traction. Existing work has leveraged homomorphic encryption (HE) schemes to enable computation on encrypted data. An early work was CryptoNets, which takes 250 seconds for one MNIST inference. The main limitation of such approaches is that of the expensive FFT-like operations required to perform operations on HE-encrypted ciphertext. Others have proposed the use of model pruning and efficient data representations to reduce the number of HE operations required. We focus on improving upon existing work by proposing changes to the representations of intermediate tensors during CNN inference. We construct and evaluate private CNNs on the MNIST and CIFAR-10 datasets, and achieve over a two-fold reduction in the number of operations used for inferences of the CryptoNets architecture.
\end{abstract}

\section{Introduction}

Machine Learning as a Service (MLaaS) is a framework in which cloud services apply machine learning algorithms on user-supplied data to produce an inference result which is then returned to the user. However, the data has to be decrypted before inference, which allows a server-side adversary to have access to the user’s  information. Homomorphic encryption (HE), can be applied to enable inference to be performed on encrypted data, enabling the result to be delivered to the user without risk of the server accessing the original data or the inference result. \textsc{CryptoNets} \citep{cryptonets} was the first application of HE to secure neural network inference, and leveraged the YASHE' scheme to perform MNIST classifications. Their approach requires a notably high number of homomorphic operations (HOPs), with a single MNIST inference requiring $\unsim290000$ homomorphic multiplications and $\unsim250$ seconds of inference latency. Later works utilised ciphertext rotations as opposed to the SIMD packing scheme, enabling convolutional and fully connected layers to be computed using much fewer HOPs \citep{gazelle,delphi}. This has been shown to reduce the inference latency of MNIST models by more than an order of magnitude, bringing confidence that private inference can be practical. \textsc{LoLa} \citep{lola} proposed novel representations for intermediate tensors and their MNIST model requires only 2.2 seconds for one inference.

In this work, we introduce a framework for secure inference on secure CNNs, designed to reduce the number of HOPs required per inference whilst preserving prediction accuracy. We integrate the convolution-packing method from \textsc{LoLa} with the fast matrix-vector product method introduced by Halevi and Shoup \citep{halevi+shoup} and utilised by \cite{gazelle} in their multi-party computation framework. We show that utilising the Halevi-Shoup method allows the use of rotations and ciphertext packing to scale better compared with the representations in \textsc{LoLa}, when applied to larger convolutional layers. We perform a more detailed investigation on the scalability of the methods used in \textsc{LoLa} to larger models and show that they are significantly outperformed by our proposed method. In addition, we compare our framework against \textsc{LoLa} by constructing models for MNIST and CIFAR-10. With the same layer parameters as \textsc{LoLa}, we are able to obtain over a two-fold reduction in the number of HOPs per inference. Our CIFAR-10 model achieves similar accuracy to that of \textsc{LoLa}'s but uses far fewer operations.

\section{Prerequisites} 

\subsection{Homomorphic operations}
 
Several recent HE schemes such as BFV \citep{bfv} and CKKS \citep{ckks} are based on the RLWE problem and support SIMD ciphertext operations. On a high level, such schemes establish a mapping between real vectors and a plaintext space. The plaintext space is usually the polynomial ring $\mathcal{R}={\mathbb{Z}[X]}/(X^N+1)$. In particular, this is a \emph{cyclotomic polynomial ring} $\mathcal{R}={\mathbb{Z}[X]}/(\Phi_M(X))$ where $\Phi_M(X)$ is the $M$-th cyclotomic polynomial and $M=2N$ is a power of two. The \emph{decoding} operation maps an element in $\mathcal{R}$ to a vector that is either real or complex, depending on the scheme used. The \emph{encoding} operation performs the reverse. Plaintext polynomials are encrypted into ciphertext polynomials using a \emph{public key}. The operations of addition and multiplication can be performed over ciphertexts using an \emph{evaluation key}. Since each ciphertext corresponds to a vector of real (or complex) values, a single homomorphic operation between two ciphertexts constitutes an element-wise operation between two vectors. In addition, such schemes support rotations of the slots within a ciphertext, with the use of Galois automorphisms. 

\subsection{Fast encrypted convolution}

The first convolutional layer in a CNN can be represented using convolution-packing (Brutzkus et al., 2019). The convolution of an input image $f$ with a filter $g$ of width $w$, height $h$ and depth $d$ can be formulated as   
\begin{align}
f * g =\sum_{x=0}^{w-1}\sum_{y=0}^{h-1}\sum_{z=0}^{d-1} g[x,y,z] \cdot \mathbf{F}^{(x,y,z)},
\end{align}
where $\mathbf{F}^{(x,y,z)}$ is a matrix such that $\mathbf{F}^{(x,y,z)}_{ij}=f[i+x,j+y,z]$. For an input image $I \in \mathbb{R}^{c_\text{in} \times d_\text{in} \times d_\text{in}}$ feature maps and kernel of window size $k\times k$, the input image is represented as $k^2 \cdot c_\text{in}$ vectors $\textbf{v}_1, \ldots, \textbf{v}_{k^2 \cdot c_\text{in}}$, where $\textbf{v}_i$ contains all elements convolved with the $i$-th value in the filter. Denote corresponding ciphertexts as $\text{ct}_1, \ldots, \text{ct}_{k^2 \cdot c_\text{in}}$. The process of producing the $j$-th output feature map is now reduced to ciphertext-plaintext multiplication of each $\text{ct}_i$ with the $i$-th value in the $j$-th filter. In total, the process requires $k^2 \cdot c_\text{in}$  ciphertext-scalar multiplications per output feature map, leading to a total of $k^2 \cdot c_\text{in} \cdot c_\text{out}$ multiplications. 

\section{Method}

In this section, we present our method for achieving privacy-preserving CNN inference with low numbers of HOPs. In summary, we adopt the fast convolution method from \textsc{LoLa} but compute the intermediate convolutional layers in a network using an efficient matrix-vector product method proposed by \cite{halevi+shoup}. This enables large convolutions to be performed in far fewer ciphertext rotations than \textsc{LoLa}'s approach of computing a rotate-and-sum procedure for each row in the weight matrix. In section 3.1 we explain the approach we use. In section 3.2, we perform an analysis to show the improvements made by our modifications compared with the approach from \textsc{LoLa}. In section 4, we apply our approach to models for the MNIST and CIFAR-10 datasets. 

\subsection{Performing intermediate convolutions}
\label{sec:hsmethod}

Consider the convolution of an image $\mathbf{I}$ with a filter $f$. For simplicity, assume that both the image and filter are square, the vertical and horizontal strides of the filter are equal. Let $\mathbf{I} \in \mathbb{R}^{d_\text{in} \times d_\text{in} \times c_\text{in}}$, and $f \in \mathbb{R}^{k \times k \times c_\text{in}}$.  It is well known that this convolution can be flattened and represented as a matrix-vector product $\mathbf{A} \cdot \mathbf{w} \in \mathbb{R}^{d_\text{out}^2 \cdot c_\text{out}}$ where $\mathbf{A} \in \mathbb{R}^{d_\text{out}^2 \cdot c_\text{out} \times d_\text{in}^2 \cdot c_\text{in}}$ and $\mathbf{w}$ is the flattened input.

\cite{halevi+shoup} introduced an efficient method of computing the encrypted matrix-vector product $\mathbf{A} \cdot \mathbf{v}$ for square $\mathbf{A}$. \textsc{GAZELLE} \citep{gazelle} extended the approach to support rectangular $\mathbf{A} \in \mathbb{R}^{m \times n}$. The method works by decomposing $\mathbf{A}$ into its $m$ diagonals, denoted $\left\{ d_1, d_2, \ldots, d_m \right\}$ such that $d_i=\left[\mathbf{A}_{i, 0}, \mathbf{A}_{i+1, 1}, \ldots, \mathbf{A}_{i + n - 1, n - 1} \right]$. Note that all row positions are in modulo $m$. Then each $d_i$ is \emph{rotated} $i$ positions to align  values belonging to the same row of  $\textbf{A}$ into the same column(s), and finally each rotated diagonal is multiplied with with corresponding rotations of $\textbf{v}$. The ciphertexts are summed, and the last stage is to apply a rotate-and-sum procedure to the resulting ciphertext. Overall, this procedure requires $O(m)$ multiplications and $O(m + \log_2 n)$ rotations. We show that this can be applied to the convolutions in \textsc{LOLA}, but under certain constraints. Neither the original method \citep{halevi+shoup} nor \textsc{GAZELLE} \citep{gazelle} discuss the constraints the ciphertext slot count $N$ imposes in this context. In Proposition \ref{remark:rot} we show that certain constraints must be applied on the sizes of the input and output of the convolution. In particular, we must have $N \ge d_\text{in}^2\cdot c_\text{in} + d_\text{out}^2 \cdot c_\text{out} - 1$, unless $d_\text{out}$ and $c_\text{out}$ are both powers of 2.

\begin{proposition}\label{remark:rot}
Let $N$ denote the ciphertext slot count, and $n=d_\text{in}^2 \cdot c_\text{in}$ be the size of the convolution input, and $m=d_\text{out}^2 \cdot c_\text{out}$ be the size of the output. The basic Halevi-Shoup method, which takes $m-1+\lceil \log_2 {\left(\frac{m+n-1}{m}\right)} \rceil$ rotations, requires the condition that $N \ge m+n-1$. If this does not hold, but it is the case that $m=2^l, \; 0 \le l \le  \log_2 N$, then $m-1+\log_2 \frac{N}{m}$ rotations are required.
\end{proposition}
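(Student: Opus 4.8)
The plan is to first pin down the algorithm precisely --- the excerpt leaves the rotation direction (and the exact indexing of the diagonals) implicit --- and then trace the contents of the working ciphertext slot by slot, showing that the stated bound on $N$ is exactly the condition that the ``live'' slots do not collide under the cyclic mod-$N$ indexing. Throughout I would use three facts already in place: $N$ is a power of two (from the cyclotomic ring $\mathcal{R}=\mathbb{Z}[X]/(X^N+1)$); the input $\mathbf{v}$ occupies slots $0,\dots,n-1$; and rotations of a diagonal $d_i$ act on \emph{plaintext} data (the public matrix $\mathbf{A}$) and so are free, whereas rotations of the encrypted input and of the running sum are the ones being counted.

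For the bookkeeping, the method pairs each of the $m$ diagonals $d_i=[\mathbf{A}_{i,0},\mathbf{A}_{i+1,1},\dots,\mathbf{A}_{i+n-1,n-1}]$ (rows mod $m$), pre-rotated by $i$ at no cost, with the rotation $\mathrm{rot}(\mathbf{v},i)$ of the input, for $i=0,\dots,m-1$; since $\mathrm{rot}(\mathbf{v},0)=\mathbf{v}$ this phase costs $m-1$ ciphertext rotations. Both the rotated $d_i$ and $\mathrm{rot}(\mathbf{v},i)$ have support on $n$ consecutive slots, so each Hadamard product does too, and --- choosing the rotation sense so that supports shift toward higher indices --- summing over $i$ yields a ciphertext supported on the $m+n-1$ consecutive slots $\{0,1,\dots,m+n-2\}$ modulo $N$. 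The structural claim I would then check by direct index-chasing is that slot $j$ of this sum holds a partial inner product feeding the output coordinate $(\mathbf{A}\mathbf{v})[\,j \bmod m\,]$, and that the partial sums on the slots $\{j, j+m, j+2m, \dots\}$ together partition the full length-$n$ inner product for that coordinate.

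The two cases then follow. If $N\ge m+n-1$ the interval $\{0,\dots,m+n-2\}$ does not wrap, so no slot is overwritten and the partial sums combine cleanly: a rotate-and-sum with doubling shifts $m, 2m, 4m, \dots$ collapses an interval of length $m+n-1$ into one length-$m$ block in $\lceil \log_2 \lceil (m+n-1)/m \rceil \rceil = \lceil \log_2 \frac{m+n-1}{m} \rceil$ rotations, using the identity $\lceil \log_2 \lceil x \rceil \rceil = \lceil \log_2 x \rceil$ valid for $x \ge 1$; this gives the stated total $m-1+\lceil \log_2 \frac{m+n-1}{m} \rceil$ and shows the condition suffices. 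For necessity, if $N < m+n-1$ the interval wraps and each slot $j \in \{0,\dots,m+n-2-N\}$ picks up an extra contribution meant for logical slot $j+N$, hence for coordinate $(j+N)\bmod m$; for generic $\mathbf{A},\mathbf{v}$ this pollutes slot $j$ whenever $(j+N)\bmod m \neq j\bmod m$, i.e.\ whenever $m \nmid N$, so the answer is wrong and the condition is genuinely needed. Finally, in the exceptional regime $N < m+n-1$ with $m=2^{l} \le N$, the fact that $N$ is a power of two forces $m \mid N$, so $(j+N)\bmod m = j\bmod m$ and the collision is \emph{benign}: slot $j$ merely accumulates the partial sums of the would-be slots $j, j+N, j+2N, \dots$, all for coordinate $j\bmod m$. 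A rotate-and-sum with step $m$ over the full $N$ slots then folds $N/m$ equal blocks in exactly $\log_2 \frac{N}{m}$ rotations and recovers each $(\mathbf{A}\mathbf{v})[r]$ once, for the total $m-1+\log_2 \frac{N}{m}$.

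The step I expect to be the main obstacle is the benign-wraparound analysis: beyond checking that wrapped terms land on slots congruent to the intended output coordinate modulo $m$, one must verify that the partition property of the diagonal partial sums survives the wrap --- no column index counted twice or dropped --- so that the final fold still reconstructs every output entry exactly. The remaining ingredients (the support computation for $d_i\odot\mathrm{rot}(\mathbf{v},i)$, the $\lceil\log_2\lceil x\rceil\rceil=\lceil\log_2 x\rceil$ identity, the rotation count of binary folding) are routine, although a careful write-up should also dispose of boundary cases: $n>m$ versus $n\le m$, the degenerate $m=N$ (no fold), and the bound $m+n-2<2N$ guaranteeing at most one wrap.
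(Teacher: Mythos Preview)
Your proposal is correct and follows essentially the same approach as the paper: both arguments track the slot indices of the diagonal contributions after rotation, observe that the live support spans $m+n-1$ consecutive positions, and reduce the wrap-around question to whether $(i+j)\bmod N \equiv (i+j)\bmod m$, which holds exactly when $m\mid N$. Your write-up is more thorough --- you explicitly derive the two rotation-count formulas via the rotate-and-sum folding (which the paper only states), justify the $\lceil\log_2\lceil x\rceil\rceil=\lceil\log_2 x\rceil$ identity, and flag the partition-preservation check under wrap --- whereas the paper's proof confines itself to the alignment argument and leaves the counts implicit.
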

\begin{proof}
Applying the Halevi-Shoup technique requires $m$ ciphertext diagonals $\mathbf{d}^{(1)}, \ldots, \mathbf{d}^{(m)}$ of length $n$ to be extracted, rotated and summed. Note that $\mathbf{d}_j^{(i)}=\mathbf{A}_{i+j,j}$. Now, if $N \ge m + n - 1$ then $N$ is sufficiently large for all rotations to be performed without wrapping around the ciphertext. If $N < m + n - 1$, then wrap-around will occur for at least one of the diagonal ciphertexts during rotation. For any slot $\mathbf{d}^{(i)}_j$, the rotation by $i$ positions will shift the slot to position $k=i + j\ (\text{mod}\ N)$. The slot $\mathbf{d}^{(i)}_j$ is from the row of $\mathbf{A}$ with index $r=i+j\ (\text{mod}\ m)$, and so must be shifted into an index that is equivalent to $r$ in modulo $m$, in order for the rotate-and-sum algorithm to be used. If $N$ is an integer multiple of $m$, then we see $k \equiv r\ (\text{mod}\ m)$ indeed holds. Otherwise, wrap-around will cause the diagonals to be misaligned when summed together. Since $N$ is a power of 2, the requirement that $m$ divides $N$ is satisfied whenever $d_\text{out}$ and $c_\text{out}$ are also powers of 2 such that $2\log_2 {d_\text{out}}+\log_2{c_\text{out}}\le \log_2 N$.
\end{proof}
Based on Proposition \ref{remark:rot}, we utilise a procedure to ensure that the method can be applied for all choices of $(d_\text{in},d_\text{out},c_\text{in},c_\text{out})$ where $d_\text{out}^2 \cdot c_\text{out} \le N$: if the condition that  $N \ge m+n-1$ does not hold, then we `round' $m$ to the closest power of $2$ not less than itself, and add corresponding rows filled with 0's to the weight matrix. This procedure is illustrated below for the simple case where $N=4,m=3$ and $n=4$. Notice that the padding allows the diagonals to align correctly after rotation. 
\begin{figure}[!htb]
\centering
\begin{subfigure}[t]{\textwidth}
  \centering
  \includegraphics[scale=0.52]{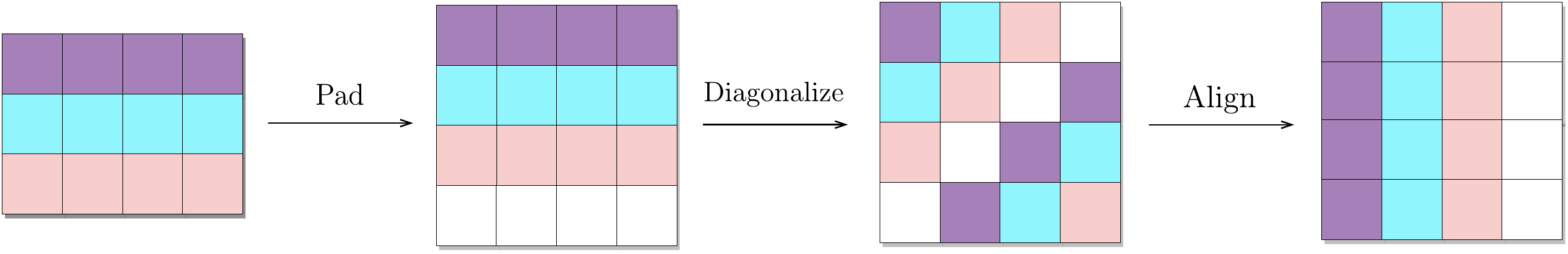}
\end{subfigure}
\caption{An illustration of our version of the Halevi-Shoup method that utilises a padding method.}
\end{figure}

\subsection{Comparison with \textsc{LoLa}}
\label{section:comparison}

\textsc{LoLa} proposes a \emph{dense-vector row-major} matrix-vector product method (which we call \textsc{LoLa}-dense) that maps the input vector in the \emph{dense} representation to an output vector in the \emph{sparse} representation. A vector $\mathbf{v}$ is dense when represented as a single ciphertext where the first $n$ slots correspond to the values in $\mathbf{v}$, and is sparse when represented as $n$ ciphertexts, where the $i$-th ciphertext contains the $i$-th element of $\mathbf{v}$ in all its slots. Let $\mathbf{A}\in \mathbb{R}^{m \times n}$. The dense-vector row-major method computes  $\mathbf{A}\mathbf{v}_1=\mathbf{v}_2$ via a multiplication per row of $\mathbf{A}$ and summing the elements inside each product vector using the rotate-and-sum procedure. 

They also propose a \emph{stacked-vector row-major} method (which we call \textsc{LoLa}-stacked) that requires input vector in the \emph{stacked} representation and provides the output in the \emph{interleaved} representation. A vector $\mathbf{v}$ is stacked when represented as a single ciphertext that contains as many copies of $\mathbf{v}$ as the ciphertext slot count permits, and is interleaved when represented as a single ciphertext similar to the dense representation - but the slots may be shuffled by some permutation. To obtain the stacked representation, they first pack $k$ copies of $\mathbf{v}$ into a single ciphertext where $k={N}/{\delta(n)}$ and $\delta(n)=2^{\lceil \log_2 n \rceil}$ is the smallest power of $2$ greater than or equal to $n$. The stacked vector is then multiplied with corresponding stacked rows of $\mathbf{A}$, before rotate-and-sum is applied. The main drawback of this method is its reliance on the ability to pack many copies of $\mathbf{v}$ into a ciphertext, which requires $N$ to be much larger than $n$. For large convolutions, this is hard to achieve. We derive the number of operations this method takes. 
\begin{remark}
The \emph{stacked-vector row-major} method proposed by \cite{lola} requires $\lceil \frac{m}{k} \rceil  \left( k + \lceil \log_2 n \rceil - 1 \right) + \lceil \frac{m}{k} \rceil - 1$ rotations and $\lceil \frac{m}{k} \rceil$ multiplications.
\end{remark}
\begin{proof}
We have included this in Appendix \ref{appendix:proof}.
\end{proof}

Suppose we are passing a 4096-length representation into a fully connected layer to map to a 64-length embedding, and let $N=16384$. \textsc{LoLa}-dense would require $64 \cdot \log_2 4096 = 768$ rotations and $64$ multiplications; \textsc{LoLa}-stacked would require $1023$ rotations and $64$ multiplications. \begin{wrapfigure}[23]{r}{0.5\textwidth}
\includegraphics[scale=0.65]{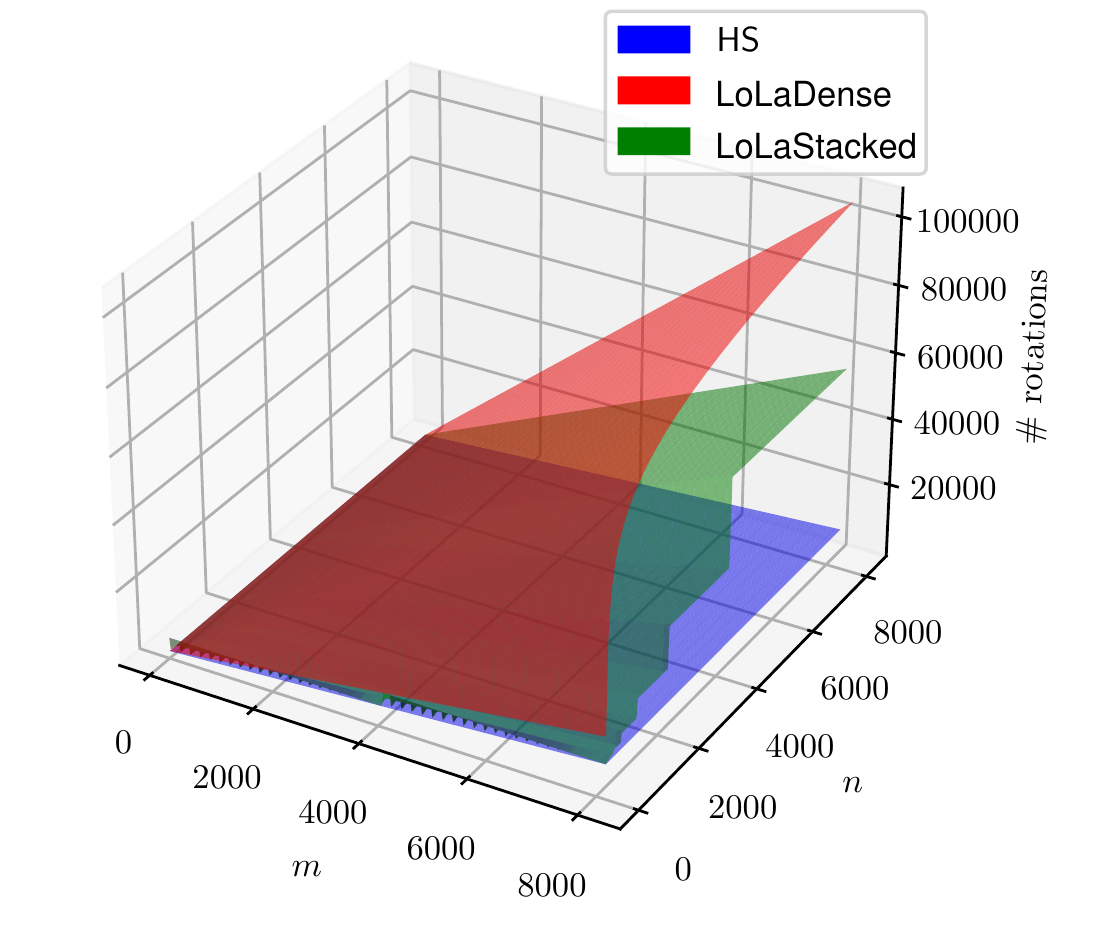}
\caption{Comparison of \textsc{LoLa}'s matrix-vector product methods with the Halevi-Shoup approach, in terms of the number of rotations required for computing a fully-connected layer from $n$ inputs to $m$ outputs.}
\label{fig:graph}
\end{wrapfigure}Now, the Halevi-Shoup product approach would require only $64+\log_2 8192=77$ rotations and $64$ multiplications. Since rotations are the most expensive operation, we compare the number of rotations required by each of the three methods in Figure \ref{fig:graph}. 

It can be shown that for any $n>1$, \textsc{LoLa}-stacked uses fewer rotations than \textsc{LoLa}-dense. However, \textsc{LoLa}-dense can be used to compute two layers instead of one. For large inputs and output layers (relative to $N$), however, both methods require significantly more rotations than HS. \textsc{LoLa}-stacked relies on $m\cdot\delta(n)/N$ being small whereas \textsc{LoLa}-dense relies on $\log_2 n$ being small. With HS, even if $n=N$, $\log_2 n$ is insignificant compared to $m$. 

In general, we believe that having the number of rotations be linear to only $m$ is beneficial  since neural networks typically down-sample or pool the data to produce denser, higher-level representations, and so can expect $n\ge m$ generally. It should be noted that there are exceptions - such as bottleneck layers.

\section{Experiments and results}

We conduct experiments on the MNIST and CIFAR-10 datasets. We first apply our approach to the \textsc{CryptoNets} architecture used in \textsc{LoLa}, to create a model \textsc{CryptoNets-HS}. The same architecture was shown to achieve close 98.95\% accuracy by \cite{cryptonets}, and we observe similar performance using their training parameters. Training is conducted using TensorFlow \citep{tensorflow}. The architecture is then converted into a sequence of homomorphic operations.  We use the SEAL library \citep{seal} for this. For reference, the original \textsc{CryptoNets} architecture is shown in Appendix \ref{appendix:models}. 
To reduce consumption of instruction depth, linear layers without activations between them are composed together. For instance, each convolution-pooling block of \textsc{CryptoNets-HS} is a single linear layer. This is done to enable our baseline to match the architecture in \textsc{LoLa}. It should be noted that omitting activations can reduce representational power and made not be ideal in practice. In addition, we construct a further-optimised model \textsc{ME} that  has similar test accuracy to \textsc{CryptoNets-HS} but is designed in consideration of the way we are computing the layers. We note that applying the HS method in computing layer 6 of \textsc{CryptoNets-HS} requires setting $m=100$ and $n=845$. Specifically, we reduce the kernel size of the first convolution from $5 \times 5$ to $3 \times 3$, and the size of the first dense layer from $100$ to $32$. To account for the reduced representational power of the first convolution, the stride is reduced from $2$ to $1$. This notably does not add any homomorphic operations to the computation. We are able to achieve 98.7\% test accuracy using \textsc{ME} after 100 epochs of training with the Adam optimiser. 

The models are implemented using operations provided by the SEAL library, and inference is performed on a standard desktop processor. We utilise only a single thread, and measure the number of homomorphic operations required per single inference and model test accuracy as our evaluation metrics.

\newcolumntype{P}[1]{>{\centering\arraybackslash}p{#1}}
\newcommand\tstrut{\rule{0pt}{2.4ex}}


\begin{table}[!htb]
\small 
\begin{tabular}{m{1.9cm} || P{.15cm}P{.15cm}P{.5cm}|P{.15cm}P{.15cm}P{.2cm}|P{.15cm}P{.15cm}P{.4cm}|P{.15cm}P{.15cm}P{.4cm}|P{.15cm}P{.15cm}P{.3cm}|P{.15cm}P{.15cm}P{.2cm}} \hline 
Layer & \multicolumn{3}{c|}{Total HOPs}  & \multicolumn{3}{c|}{Add PC}  & \multicolumn{3}{c|}{Add CC} & \multicolumn{3}{c|}{Mul PC} & \multicolumn{3}{c|}{Mul CC} & \multicolumn{3}{c}{Rot}\\ 
 & $M$  & $L^\prime$ & $L$ & $M$  & $L^\prime$ & $L$ & $M$  & $L^\prime$ & $L$ & $M$  & $L^\prime$ & $L$ & $M$  & $L^\prime$ & $L$ & $M$  & $L^\prime$ & $L$\\
\hline\tstrut
Conv1    & 90 & 250 & 250 & 5 & 5& 5 & 40 & 120 & 120 & 45 & 125 & 125 & - & - & - & - & - & -     \\
Flat1    & 8 & 8 & 8 & - & - & - & 4 & 4 & 4 & - & - & - & - & - & - & 4 & 4 & 4     \\
Square1  & 1 & 1 & 1 & - & - & - & - & - & - & - & - & - & 1 & 1 & 1 & - & - & -     \\
Conv2-Dense1 & 110 & 308 & 492 & 1 & 1 & - & 38 & 103 & 246 & 32 & 100 & 13 & - & - & - & 39 & 104 & 246     \\
Square2 & 1 & 1 & 1 & - & - & - & - & - & - & - & - & - & 1 & 1 & 1 & - & - & -    \\
Dense2 & 36 & 38 & 279 & 1 & 1 & - & 12 & 13 & 139 & 10 & 10 & 10 & - & - & - & 13 & 14 & 130    \\
\hline \tstrut
Total & \textbf{246}&606&1031 & 7&7&\textbf{5} & \textbf{94}&240&509 & \textbf{87}&235&148 & 2&2&2 & \textbf{56}&122&380 \\
\hline
\end{tabular}

\caption{Comparison of operations in three different privacy-preserving MNIST models. $L$ indicates the original \textsc{LoLa}-MNIST model, $L^\prime$ indicates the \textsc{CryptoNets-HS} model, and $M$ indicates the \textsc{ME}  model. `CC' indicates an operation between two ciphertexts and `PC' indicates an operation between a plaintext and a ciphertext.}\label{tab:latency1}



\end{table}

Table 2 shows a breakdown of the operations for the models discussed so far. In homomorphic encryption applied to neural networks, the most expensive homomorphic operation is rotation, with a worst case time complexity of performing both a number theoretic transform (NTT) and an inverse NTT on vectors of length $N$. We observe that \textsc{CryptoNets-HS} requires a total of 122 rotations, as shown in Table \ref{tab:latency1}. \textsc{LoLa}-MNIST requires a total of 380 rotations\footnote{Note that this is deduced to the best of our ability using the descriptions supplied in their paper.} for the same architecture. We notice that at both the convolution-pooling blocks in the \textsc{CryptoNets} architecture, the HS implementation has fewer rotations than \textsc{LoLa}. The reduction in rotations is a direct result of applying the HS method to compute flattened convolutions, in which the number of rotations scales with the size of the output tensor, rather than the size of the input. \textsc{LoLa} uses the stacked-vector row-major method for the first convolution, and the dense-vector row-major method for the second convolution. In Section \ref{section:comparison} we discussed that both of these scale poorly compared to the HS method. In terms of latency, the \textsc{CryptoNets-HS} model requires 2.7 seconds, whereas \textsc{ME} requires only 0.97 seconds. For reference, \textsc{LoLa}-MNIST requires 2.2 seconds per inference; however, they utilise 8 cores on a server CPU whereas our homomorphic operations are run on a single core of a desktop processor.

We also construct a model for the \textsc{CIFAR-10} dataset, called \textsc{CE}. The architecture details are shown in \ref{appendix:tables}. It achieves sightly lower test accuracy ($73.1\%$ vs. $74.1\%$) but requires less than $10\%$ the number of rotations than LoLa's CIFAR-10 model. This is due to both ensuring that intermediate tensor sizes fit well into the number of available slots, and also the use of the HS method. The number of operations for each layer is shown in  \ref{appendix:cifar}. 

A big takeaway from the results is that the application of the HS method to the framework proposed by \textsc{LoLa} can significantly reduce the number of rotations, which are the most computationally expensive HE operation. In addition, the optimised \textsc{ME} is significantly more efficient than \textsc{LoLa} whilst achieving similar accuracy. 

\section{Conclusion}
Privacy-preserving inference using homomorphic encryption is largely constrained by the computational requirements of the operations. We propose improvements over \textsc{LoLa} to achieve lower latencies computing intermediate convolutions, resulting in over a two-fold reduction in the number of rotations for the same MNIST architecture. It is clear that further improvements can made along this direction, especially in the topic of automatically selecting suitable layer parameters to set the trade-off between inference latency and model accuracy \citep{autoprivacy}.
\newpage
\bibliography{iclr2022_conference}
\bibliographystyle{iclr2022_conference}

\newpage

\appendix
\section{Appendix}
\subsection{CryptoNets architecture}
\label{appendix:models}
\begin{figure}[!htb]
\centering
\includegraphics[scale=0.8]{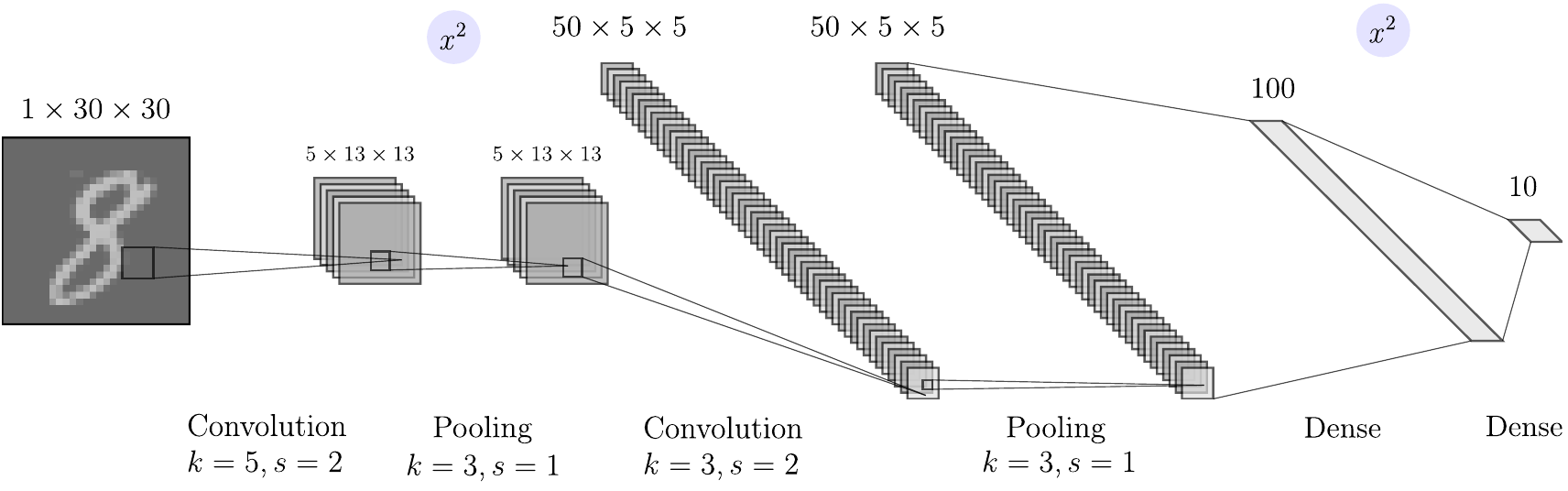}
\caption{The architecture used in LoLa. $k$ indicates the kernel size and $s$ indicates the stride of convolution and pooling. Note that the original $28\times28$ input is padded.} \label{fig:mnist}
\end{figure} 

\subsection{Details of our experimental models}
\label{appendix:tables}
\begin{table}[!htb]
\small
\centering

\begin{subtable}{.49\textwidth}
\centering
\begin{tabular}{ P{.6cm} p{1.6cm} p{1.8cm} p{1.6cm}  }
\hline
\textbf{Layer}  & \textbf{Description} & \textbf{Parameters} & \textbf{Input}\\
 \hline
 \hline
  1 & Convolution & $k=3,s=1$ & (1, 30, 30)\\
  - & Square & - & (5, 28, 28) \\
 2 & Avg. Pool & $k=3, s=2$ & (5, 28, 28)\\
 3 & Convolution & $k=3,s=1$ & (50, 14, 14)\\
 4 & Avg. Pool & $k=3, s=2$ & (50, 12, 12) \\
 5 & Flatten & - & (50, 12, 12) \\
 6 & Dense & $m=32$ &  (1250) \\
  - & Square & - & (32) \\
 7 & Dense & $m=10$ &  (32) \\
  - & Softmax & - & (100) \\
 \hline
\end{tabular}
\caption{\textsc{ME} layer parameters.}\label{tab:mnist-opt}
\end{subtable}

\begin{subtable}{1\textwidth}
\centering
\begin{tabular}{ P{1cm} p{2.5cm} p{1.8cm} p{1.6cm} p{1.6cm}  }
\hline
\textbf{Layer}  & \textbf{Description} & \textbf{Parameters} & \textbf{Input} \\
 \hline
 \hline
 1 & Convolution & $k=3,s=1$ & (1, 32, 32) \\
 - & Square & & (18, 30, 30) \\
 2 & Average Pooling & $k=2, s=2$ & (18, 30, 30)  \\
 3 & Sub-convolution & $k=3,s=1$ & (18, 10, 10) \\
 4 & Sub-convolution & $k=1,s=1$ & (13, 8, 8) \\
 - & Square & & (18, 30, 30) \\
 5 & Average Pooling & $k=2, s=2$ & (64, 8, 8) \\
 6 & Convolution & $k=3,s=1$ & (64, 4, 4) \\
 7 & Flatten & - & (256, 2, 2) \\
 8 & Dense & $m=512$ &  (1024)  \\
 - & Square & & (18, 30, 30) \\
 9 & Dense & $m=10$ &  (256) \\
 - & Softmax & - & (10) \\
 \hline
\end{tabular}
\caption{\textsc{CE} layer parameters.}\label{tab:cf-fast}
\end{subtable}

\caption[The original MNIST architecture and our optimised design.]{Layer parameters for \textsc{ME} (left) and \textsc{CE} (bottom). $k$ and $s$ indicate the kernel width and stride respectively. $m$ indicates the number of output nodes of a fully-connected layer.}
\label{table:table_a}
\end{table}

\subsection{Proof of Remark 1}
\label{appendix:proof}
\begin{proof}
The stacking in \cite{lola} is done is using $k-1$ rotations and additions. Then the point-wise multiplication requires a single SIMD multiplication between the ciphertext and $k$ (stacked) rows of $\mathbb{A}$. Finally, $\lceil \log_2 n \rceil$ rotations and additions are performed to compute the $k$ elements of the product vector. Since $N$ is not always sufficiently large to pack $m$ copies of $n$, this procedure has to be performed $\lceil \frac{m}{k} \rceil$ times. To produce all elements of the product vector, $\lceil \frac{m}{k} \rceil  \left( k + \lceil \log_2 n \rceil - 1 \right) $ rotations and $\lceil \frac{m}{k} \rceil$ multiplications are performed, before $\lceil \frac{m}{k} \rceil - 1$ rotations are performed to bring the $\lceil \frac{m}{k} \rceil$ ciphertexts into a single one. Note that the produced ciphertext(s) contain the product elements in an interleaved format, i.e. a permutation. 
\end{proof}

\subsection{Performance on \textsc{CIFAR-10}}
\label{appendix:cifar}

\begin{figure}[h]
\centering
 \vspace*{2mm}
\begin{tabular}{ m{2cm}|| P{.7cm} P{.7cm}  P{.7cm} P{.7cm} P{.7cm} P{.7cm} }
\hline 
Layer  & HOPs & Add PC & Add CC & Mul PC & Mul CC & Rot  \\
 \hline\tstrut 
Conv1 				& 972 & 18& 468  & 486 & - & -    \\
Flat1 				& 30  & - & 15 & -  & - &15  \\
Square1 			& 3  & - & -  & - & 3 & -   \\
Pool1-Conv2   	    & 7506 & 1 & 2504 & 2496 & - &2505  \\
Conv3   			& 1677 & 64 & 768  & 832  & - &13  \\
Flat2    			& 126  & - & 63 & - & - & 63   \\
Square2 			& 1  & - & - & - & 1 & - \\
Pool2-Dense1 		& 778 & 1 & 260 & 256 & - &261  \\
Square  			& 1  & - & - & - & 1 &-   \\
Dense2  			& 40  & 1 & 14 & 10 & - & 15  \\
 \hline \tstrut
 Total & 11134 & 85 & 4092 & 4080 & 5 & 2872  \\
 \hline 
\end{tabular}
\caption{Break-down of operations in \textsc{CE}.}
\label{tab:latency3}
\end{figure}

\end{document}